\theoremstyle{plain}
\newtheorem{theorem}{Theorem}[section]
\newtheorem{proposition}[theorem]{Proposition}
\newtheorem{corollary}[theorem]{Corollary}
\theoremstyle{definition}
\newtheorem{definition}[theorem]{Definition}
\newtheorem{remark}[theorem]{Remark}
\newtheorem{example}[theorem]{Example}
\newcommand{\query} {\text{query}}
\newcommand{\ttest} {\text{test}}
\newcommand{\dA}{\delta \! A}
\newcommand{\model}{\text{model}}
\newcommand{\mlp}{\text{mlp}}
\DeclareMathOperator{\MultiHeadAttn}{MultiHeadAttn}
\DeclareMathOperator{\MLP}{MLP}
\DeclareMathOperator{\LayerNorm}{LayerNorm}
\title{Learning without training: \\The implicit dynamics of in-context learning}
\renewcommand{\today}
\author[*,1]{Benoit Dherin}
\author[*,1]{Michael Munn}
\author[*, 1]{Hanna Mazzawi}
\author[1]{Michael Wunder}
\author[1]{Javier Gonzalvo}
\affil[*]{Equal contributions}
\affil[1]{\thepa{}{}}
\begin{abstract}
One of the most striking features of Large Language Models (LLMs) is their ability to learn in-context. Namely at inference time an LLM is able to learn new patterns without any additional weight update when these patterns are presented in the form of examples in the prompt, even if these patterns were not seen during training. The mechanisms through which this can happen are still largely unknown. In this work, we show that the stacking of a self-attention layer with an MLP allows the transformer block to implicitly modify the weights of the MLP layer according to the context. 
We argue through theoretical analysis and experimentation that this simple mechanism may help explain why LLMs demonstrate capabilities of in-context learning, beyond what is captured during training.
Specifically, we show that a standard forward pass with context is mathematically equivalent to a forward pass without context but with the MLP weights updated by a minimal low-rank update representing the context. 
\end{abstract}
\begin{document}

\maketitle

\section{Introduction}
Large language models (LLMs), powered by the transformer architecture \citep{vaswani2017attention}, have revolutionized modern machine learning, with wide-ranging applications in science, industry, and art \citep{liu2023pre, dong-etal-2024-survey}. Despite this impact, the mechanisms behind their impressive emergent properties \citep{wei2022emergent, bubeck2023sparksartificialgeneralintelligence} are still not fully understood. One of the most fascinating and compelling of these properties is the ability of LLMs to perform in-context learning (ICL)  
wherein the model is able to adapt based on information provided in the input prompt, without any changes or modifications to the model's underlying weights. 
Our work is focused on better understanding the mechanisms which enable this advantageous behavior.

Historically, in machine learning, the ability to extract patterns from data has been understood as a dynamical process in which model weights are updated through an optimization procedure \citep{Goodfellow-et-al-2016}. However, in the case of ICL, the model weights remain unchanged. Instead, LLMs appear to re-organize or reconfigure their internal representations depending on the prompt and this dynamic adjustment allows them to make predictions that are significantly more accurate. 
This mysterious and extremely helpful property of LLMs has led researchers to conjecture an implicit form of weight updates taking place at inference time when a prompt is consumed \citep{garg2022can,vonOswald2023TransformersLI,dai2023why,akyrek2023what,zhang2024trained,huang2025transformers}. Recent works have even been able to theoretically justify this intuition, showing that simplified transformer blocks, trained on toy setups of linear regression datasets, perform implicit weight updates corresponding to a sort of gradient descent optimization \citep{vonOswald2023TransformersLI,dai2023why,zhang2024trained}.
Together, these works suggest it is possible to understand ICL as a form of implicit finetuning of the original pretrained model.
In this work, we follow this intuition of ICL as imposing implicit weight updates and focus on the contextual information property which we believe is key to understanding the underlying effect of ICL. To this end, we introduce the notion of a contextual block, a generalization of a transformer block. We show that layers with this contextual property, when stacked with standard neural networks, implicitly transform a context into a weight update of the very first layer of the subsequent neural network. Through our analysis we are able to provide an explicit formula for this implicit update to the feedforward layer weights, which surprisingly turns out to be a rank-1 matrix. 
Interestingly, other works such as \cite{meng2022locating} have uncovered that explicit updates with similar rank-1 matrices can modify factual information in a LLM. This suggests that these low-rank matrices may be central to the way LLMs organize and process information at inference time.

Namely, our work demonstrates that contextual blocks, such as self-attention layers combined with a neural network, perform a sort of \emph{implicit} low-rank finetuning that can be explicitly described as a  minimal token-patch: a unique, Frobenius-norm minimizing, rank-1 matrix update of the MLP weights computed directly from the relative effect on the context. 

Let us clarify what we mean by \emph{implicit}. Our perspective differs from mechanistic approaches that analyze the explicit computation of a forward pass directly, for instance in terms of circuits \citep{elhage2021mathematical, olah2020zoom, conmy2023towards}. Instead, we show that a forward pass with a context is \emph{functionally equivalent}---identical in output---to a \emph{different computation} that is never performed on the hardware: one in which a weight update replaces the prompt context. Because this weight update is never directly computed but only equivalent to the standard forward pass, we call these contextual weight updates \emph{implicit}. Likewise, the learning dynamics we uncover is not executed as a computation on the hardware; the actual output is merely mathematically equivalent to what would result if such a learning process had taken place. This is an abstract viewpoint of mathematical equivalence, analogous to studying discrete optimization through continuous dynamical systems. 

Our main contributions are as follows:
\begin{itemize}

\item We introduce the notion of a contextual block which generalizes the information-processing mechanisms found in architectures such as transformers.

\item We define the minimal token-patch and show that for contextual blocks the context acts as an implicit rank-1 update of the MLP weights, uniquely characterized as the Frobenius-norm minimizing solution.

\item Using these implicit updates, we uncover an implicit gradient descent learning dynamics which arises as prompt tokens are consumed during inference. 
\end{itemize}

Note that our implicit weight update formula has two parts in the context of transformer blocks with skip connections (see Theorem \ref{theorem:context2weights_skip}): a low-rank weight matrix update and a vector update. The former is reminiscent of the updates found in factual knowledge editing like in \cite{mitchell2022memory} or \cite{meng2022locating}, while the latter has strong similarities to the steering vectors found in \cite{ilharco2022editing}, \cite{hendel2023context}, or \cite{todd2023function} for instance. In a way, our work connects steering vectors and low-rank matrix edits to the implicit mechanisms of the transformer architecture. 

\section{Implicit update theory}

In this section, we abstract some key properties of transformers. In particular, we introduce the notion of a contextual layer, which generalizes the self-attention layer found in transformer blocks. In this setting a contextual block is the composition of a contextual layer with a standard neural network generalizing the notion of a transformer block. Then we prove our main theorem, which shows that the context for contextual blocks acts as a low-rank fine tuning update of the neural network weights. Based on this, we derive the implicit learning dynamics of ICL. We also discuss a natural application to prompt compression in Section~\ref{section:compression_experiment}. For the sake of simplicity, we state our results in the case of a neural network without skip-connection, and for one contextual block. The skip-connection case with multiple contextual blocks is similar but more complicated and fully worked out in Appendix \ref{appendix:extensions}.

\subsection{Contextual Blocks and Minimal Token Patches}
\label{section:contextual_blocks}

We define a \emph{contextual layer} as a network layer $A(\cdot)$ that can take a single vector $x$ alone as input yielding an output $A(x)$; or, optionally, $A$ can take in addition a context $C$ (e.g., a sequence of tokens, an image, etc.) along with the vector $x$, yielding the output.

As a prototypical and guiding example of a contextual layer, consider the self-attention layer of a transformer block, where the context $C$ is an instruction prompt consisting of a sequence of context tokens $C = [c_1, \dots, c_n]$ and $x$ is the query token from which the LLM will make a prediction. Together $C$ and $x$ create a contextualized input prompt $[C, x] =[c_1, \cdots, c_n, x]$, which is the concatenation of the context tokens and the query token. Note that a transformer maps sequences of a given length to a sequence of the same length. Therefore, we take $A(C,x)$ to be the output of the self-attention layer corresponding to last token $x$\footnote{The situation our main statements deal with is actually a more general one: Namely, we will want to remove from the context $C$ only a part of it $Y$; in this case the notation $A(C\setminus Y,x)$ or $T(C\setminus Y,x)$ will mean the outputs corresponding to \emph{any token} $x\in C\setminus Y$, and not only the last token in the prompt.}. In this way, both $A(C,x)$ and $A(x)$ occupy the same output vector space. 

\begin{definition}
\label{definition:contextual_block}
    A \emph{contextual block} is the composition $T_W = M_W\circ A$ consisting of a contextual layer $A$ as above with a neural network $M_W$; i.e., $M_W(z) = f_\theta(Wz + b)$, where $W$ and $b$ are the weights of an initial fully-connected dense layer and $f_\theta(z)$ is the rest of the neural network parameterized by weights $\theta$. 
\end{definition}

In what follows, we demonstrate that the contextual effect of any sequence $Y$ can be perfectly assimilated into the static weights $W$. Let $Y$ be a contiguous subsequence of the context $C$ (e.g., a set of in-context demonstrations). We denote the ablated context, where $Y$ has been removed, as $C \setminus Y$.
Formally, given a context $C$ and a query token $x \in C \setminus Y$, we ask whether there exists a weight displacement $M$ such that the output of a forward pass over the partial context $C \setminus Y$ using modified weights $(W+M)$ is functionally identical to the standard forward pass over the full context $C$. 

Because the  non-linear network $f_\theta$ is not generally injective, finding all matrices $M$ that satisfy $T_{W+M}(C\setminus Y, x) = T_W(C, x)$ is intractable and ill-posed. Instead, we seek a principled solution by demanding \emph{pre-activation equivalence}. We seek matrices $M$ that exactly match the linear input representation to $f_\theta$: i.e., $(W + M) A(C\backslash Y, x) = W A(C,x)$. This requires satisfying the \emph{representation-matching equation} 
\begin{equation}
\label{equation:representation_matching}
M A(C\setminus Y, x) = W\delta A_x(Y)
,\quad\textrm{where}\quad \delta A_x(Y) := A(C,x) - A(C\setminus Y, x)
\end{equation}
is the \emph{context vector} for chunk $Y$. The representation-matching equation trivially guarantees functional equivalence: Namely
\begin{eqnarray*}
(W + M)A(C\setminus Y, x)  
& = & W A(C\setminus Y, x) + M A(C\setminus Y, x) \\
& = & W (A(C\setminus Y, x) + \delta A_x(Y)) \\
& = & W A(C, x).
\end{eqnarray*}
Among all solutions $M$, we seek the one that modifies the existing weights minimally:

\begin{definition}[Minimal Token-Patch]
For a context chunk $Y$ and a query token $x$, a \emph{token-patch} is any matrix $M_x \in \mathbb{R}^{d \times d}$ that guarantees functional equivalence by solving the representation-matching equation $M_x A(C \setminus Y, x) = W \delta A_x(Y)$. We define the \emph{minimal token-patch} as the unique solution in this class that strictly minimizes the Frobenius norm $\|M_x\|_F$.
\end{definition}

The following theorem provides a general formula for the unique minimal token-patch.

\begin{theorem} 
\label{theorem:context2weights}
Consider a contextual block $T_W=M_W\circ A$ formed by a contextual layer $A$ composed with a neural network $M_W$ whose first fully-connected layer has weight matrix $W$. Given a context $C$ and an input $x \in C\setminus Y$, provided that $A(C\setminus Y, x) \neq 0$, there exists a unique minimal token-patch $\Delta_x W(Y)$. In other words, among all matrices that achieve the functional equivalence
\begin{equation}
\label{equation:Delta_W_Y}
T_W (C,x) = T_{W + \Delta_x W(Y)}(C\setminus Y, x) 
\end{equation}
via pre-activation matching, $\Delta_x W(Y)$ uniquely possesses the minimal Frobenius norm $\|\Delta_x W(Y)\|_F$. It is given by the exact formula:
\begin{equation}
\Delta_x W(Y) = \frac{(W \delta A_x(Y)) A(C\setminus Y, x)^T}{\|A(C\setminus Y, x)\|^2},
\end{equation}
where $\delta A_x(Y) := A(C,x) - A(C\setminus Y, x)$ is the context vector. 
Furthermore, this unique \textbf{minimal displacement update} is conservative: it preserves the original weights $W$ in all directions orthogonal to $A(C\setminus Y,x)$. Specifically, for any vector $u$ such that $u^T A(C\setminus Y, x) = 0$, we have $(W + \Delta_x W(Y))u = Wu$.   
\end{theorem}

The proof is given in Appendix~\ref{appendix:proofs_minimal_update}.
The theorem above is a simplified version of our main result which we state here for the sake of simplicity and clarity. We provide the general result and its proof for a stack of contextual blocks \emph{with skip connections} in Appendix \ref{appendix:extensions} (Theorem \ref{theorem:context2weights_skip}).

\begin{remark}[Counterfactual interpretation]
The context vector $\delta A_x(Y) = A(C,x) - A(C\setminus Y, x)$ is an analytical device measuring the functional effect of the context chunk $Y$ on the layer output. It is not a quantity that the model computes in a single forward pass; rather, it plays a role analogous to counterfactual interventions in causal interpretability \citep{vig2020investigating, conmy2023towards}, enabling us to isolate and quantify the causal contribution of $Y$ to the model's prediction.
\end{remark}

The actual algebraic form of the token patch depends on the architecture of the contextual block. To illustrate this, let us compute the token patch explicitly in the case where $A$ is a linear attention, which gives us some intuition on the nature of these updates:

\begin{example}[Linear Attention]
\label{example:linear_attention}
When the contextual layer $A$ uses linear attention (i.e., self-attention without softmax normalization), the token-patch can be computed exactly. Let $C$ be a sequence of context tokens with key and value representations $k_i$ and $v_i$, and let $q_x$ be the query representation for token $x$. Let $Y \subset C$ be a context chunk as before. The linear attention output is $A(C, x) = v_x + \sum_{i} (q_x^T k_i) v_i$, where $v_x$ is the value vector for the query token\footnote{Resolving self-attention with a query token $x$ usually yields a term $(q_x^T k_x)v_x$. Here we write $v_x$ for simplicity, implying $q_x^T k_x = 1$, or representing a fixed self-attention weight. In any case, this term cancels out when computing the difference $\delta A_x(Y)$.}.
In this case, the context vector factorizes as 
\begin{equation}
\delta A_x(Y)  
= \sum_{y\in Y} (k_y^T q_x) v_y
= \sum_{y\in Y} (v_y k_y^T) q_x = \Delta(Y) q_x,
\quad\textrm{with}\; \Delta(Y) := \sum_{y} v_y k_y^T.
\end{equation}
Substituting this into our token-patch formula, we see that the token patch also factorizes:
\begin{equation}
\Delta_x W(Y) = W \Delta(Y) P_x\quad\textrm{where}\; P_x := \frac{q_x a_x^T}{\|a_x\|^2},
\end{equation}
where we used the shorthand notation $a_x = A(C\backslash Y, x)$. Thus for linear architectures, the context chunk $Y$ is completely accumulated into a sort of static $KV$-state matrix $\Delta(Y)$, which is then projected via a rank-1 transition $P_x$ to form the token patch.
\end{example}

\subsection{The implicit learning dynamics of ICL}

\label{section:implicit_learning_dynamics}
With this insight on the relationship between the  the context and its implicit affect on the  weight parameters, we now use Theorem  \ref{theorem:context2weights} to examine the weight dynamics of $W$ via in-context learning. Namely, when the context $C=[c_1, \dots, c_n]$ is a sequence of tokens, an iterative application of Theorem  \ref{theorem:context2weights} uncovers an implicit learning dynamics generated by the effect of each context token on the contextual block output. 
Starting with the initial weight $W_0$ for $M_W$, the first dense layer of the neural network, we compute the weight updates corresponding to the addition of a new context token $c_i$ provided to us by Theorem  \ref{theorem:context2weights}. We have iteratively
\begin{equation*}
T_{W_0}(c_1, x)  =  T_{W_0 + \Delta_x W_0(c_1)}(x), \quad
T_{W_0}(c_1, c_2, x)  =  T_{W_0 + \Delta_x W_0(c_1, c_2)}(x), 
 \dots
\end{equation*}

This leads to the following sequence of corresponding MLP weights 
\begin{equation}
\label{equation:implicit_weight_sequence}
W_1  =  W_0 + \Delta_x W_0(c_1), \quad
W_2  =  W_0 + \Delta_x W_0(c_1, c_2),\dots,\quad
W_n  =  W_0 + \Delta_x W_0(c_1, \dots, c_n).
\end{equation}
By construction, this sequence converges to the effect of the full context on the initial MLP weights so that
\begin{equation*}
T_{W_0}(c_1, \dots, c_n, x) =T_{W_n}(x). 
\end{equation*}
The following proposition shows that this implicit learning dynamics is similar to that of online gradient descent, where the tokens play the role of the data points and a loss which changes at each step depending of the token considered for that step. 

\begin{proposition}
\label{proposition:implicit_learning_dynamics}
In the notation above, the iterative process of weight updates can be realized as a form of stochastic gradient updates
$
W_{i+1} = W_{i} - h \nabla_W L_{i}(W_{i}) 
$
with learning rate given by $h = 1/\|A(x)\|^2$ and loss at step $i$ given by 
$L_{i}(W) = \operatorname{trace}(\Delta_i^T W)$
where 
$\Delta_{i} = W_0\Big(A(c_1,\dots, c_i,x) - A(c_1, \dots, c_{i+1}, x)\Big)A(x)^T.$
\end{proposition}

The proof is given in Appendix~\ref{appendix:proofs_minimal_update}.
Notice that $\Delta_i$ measures the marginal effect of the addition of context token $c_{i+1}$ to the partial context $c_1,\dots, c_i$. Intuitively, when $c_{i+1}$ has no marginal effect on the output, i.e., when $A(c_1,\dots, c_i,x) = A(c_1,\dots, c_{i+1},x)$, we would expect that the corresponding update to the MLP weights $W$ also vanishes. This intuition is quantitatively justified through Proposition \ref{proposition:implicit_learning_dynamics} since by definition $\Delta_i$ indeed vanishes since $A(c_1,\dots, c_i,x) - A(c_1,\dots, c_{i+1},x)$ is zero.

\begin{remark}[On the algebraic nature of the gradient descent reformulation]
The gradient descent reformulation in Proposition~\ref{proposition:implicit_learning_dynamics} is an algebraic consequence of the additive structure of the weight updates: any sequence $W_{i+1} = W_i + G_i$ can be written as $W_{i+1} = W_i - h\nabla_W L_i(W_i)$ for the linear loss $L_i(W) = -\text{trace}(G_i^T W)/h$. The content of the proposition therefore lies less in the gradient descent form per se, but rather in the specific structure of the gradient $\Delta_i$, which measures the marginal effect of incorporating each context token on the attention output. In the linear-attention setting (see below), we show constructively that this formal gradient recovers a task-level least-squares objective for appropriately trained weights. Furthermore, the experimental comparison with finetuning in Section~\ref{section:experiments} provides empirical evidence that the implicit dynamics tracks a task-relevant objective even for softmax attention. Finally, the alternative dynamics derived in Appendix~\ref{appendix:extensions} offer a complementary sequential perspective on these updates.
\end{remark}

\paragraph{Relation with explicit supervised learning.}
The implicit dynamics introduced in Proposition \ref{proposition:implicit_learning_dynamics} is valid for \emph{any} prompt (even those without explicit task data points) and for \emph{any} weight values (including initialization!) of the contextual block. In these settings, there are no tasks, downstream objective, or even meaningful learning dynamics to compare our implicit learning dynamic to. At that level of generality it is perhaps not too surprising that this "meta" dynamics (loss or gradient) is not immediately recognizable as a standard supervised objective. 
However, when the prompt consists of structured in-context learning demonstrations $C = [c_1,\dots, c_n]$ in the form of input / output vectors $c_i = (x_i, y_i)^T$ followed by a test query $z = (x, 0)^T$, we have the least-square objective and its gradient descent dynamics to compare with. While establishing an exact analytical equivalence between the implicit dynamics and an explicit least-square dynamics for general non-linear architectures appears analytically intractable (and beyond the scope of this paper), we can do so under simplifying linear attention assumptions standard in the literature \citep{vonOswald2023TransformersLI, garg2022can, akyrek2023what}. 
Namely, in the case of a linear attention layer as in Example \ref{example:linear_attention}, the implicit gradient in Proposition \ref{proposition:implicit_learning_dynamics} becomes:
\begin{equation}
-h \nabla_W L_i(W_i) = W_0 \big(A(C_{i+1}, z) - A(C_i, z)\big) \frac{A(z)^T}{\|A(z)\|^2}
= W_0 (v_{i+1} k_{i+1}^Tq_z) \frac{v_z^T}{\|v_z\|^2} = W_0  (v_{i+1} k_{i+1}^T) P_z
\end{equation}
where $P_z = \frac{q_zv_z^T}{\|v_z\|^2}$ is the transition matrix. Now, following \cite{vonOswald2023TransformersLI}, the question is whether there exist special weight values that the network could learn during training, making the implicit gradient above coincide with a standard least-square gradient. In fact, there are. 
If we assume for the sake of tractable computation that the MLP consists of only one linear layer $W = [\omega_x, \alpha_y]$ directly predicting the scalar regression target $\hat y(x) = [\omega_x, \alpha_y] A(C, z)$, then we can choose $W = [\omega_0, -1]$, where $\omega_0$ is randomly initialized. For the linear attention matrices we choose: $W_V = \text{id}$ (so $v_i = c_i$), $k_i = W_K c_i = (-\eta x_i, 0)^T$ (where $\eta$ is a parameter), and $q_i = W_Q z = (x, 0)^T$.
With these weight values, we have that
\begin{equation}
-h \nabla_W L_i(W_i) = -\eta [\omega_0, -1] \begin{bmatrix} x_{i+1} \\ y_{i+1} \end{bmatrix} x_{i+1}^T P_z= -\eta (\omega_0 x_{i+1} - y_{i+1}) x_{i+1}^T P_z = -\eta \nabla_{\omega_0} \mathcal{LS}_i(\omega_0)P_z
\end{equation}
where $\mathcal{LS}_i(\omega_0) = \frac{1}{2}(\omega_0 x_{i+1} - y_{i+1})^2$ is the standard Least Squares loss for the $(i+1)$-th sample. We can now tie the full dynamics together by considering the weight $W_n$ obtained after processing the entire context $C = [c_1, \dots, c_n]$. Since each partial update is computed relative to the initial weights $W_0$, the weights accumulate as:
\begin{equation}
\label{equation:Wn_LS}
W_n = W_0 - \eta \sum_{i=0}^{n-1} \nabla_{\omega_0} \mathcal{LS}_i(\omega_0) P_z = W_0 - \eta \nabla_{\omega_0} \mathcal{LS}(\omega_0) P_z
\end{equation}
where $\mathcal{LS}(\omega) = \sum \mathcal{LS}_i(\omega)$ is the full-batch Least Squares loss. We then evaluate the context-free output $T_{W_n}(z) = W_n z$:
\begin{eqnarray}
\label{equation:TWn_derivation}
T_{W_n}(z) &=& (W_0 - \eta \nabla_{\omega_0} \mathcal{LS}(\omega_0) P_z) z \nonumber \\
&=& [\omega_0, -1] \begin{bmatrix} x \\ 0 \end{bmatrix} - \eta \nabla_{\omega_0} \mathcal{LS}(\omega_0) \left( \frac{x z^T}{\|z\|^2} \right) z \nonumber \\
&=& (\omega_0 - \eta \nabla_{\omega_0} \mathcal{LS}(\omega_0))x
\end{eqnarray}
We conclude that in this simplified setting, the implicit learning dynamics is exactly equivalent to one step of full-batch Gradient Descent with learning rate $\eta$ on the weights $\omega_0$. This establishes an existence proof that the model has the capacity to implement gradient descent via its weights, consistent with the observations in \cite{vonOswald2023TransformersLI}.

\section{Experiments}
\label{section:experiments}

To evaluate our theoretical framework, we conduct a series of experiments in the well-studied setting of in-context learning of linear functions \citep{garg2022can,akyrek2023what,vonOswald2023TransformersLI}. All model architecture and training hyperparameters are detailed in Appendix \ref{appendix:experiment_details}. Our experiments are designed to achieve four goals: (1) verify that our exact algebraic identities hold under finite-precision arithmetic in deep networks, (2) show these implicit dynamics closely track the explicit dynamics seen during finetuning, (3) demonstrate how our framework serves as an architectural diagnostic tool, and (4) prove practical utility by approximating token-dependent updates into a static ``thought patch'' for prompt compression.

\subsection{Numerical Verification of Functional Equivalence}

While Theorem \ref{theorem:context2weights_skip} provides an exact algebraic identity for functional equivalence, floating-point arithmetic and ill-conditioned matrices in deep, non-linear networks can cause theoretical identities to break. 
To assess the robustness of our implicit updates, we conduct our experiments using a 10-layer transformer equipped with LayerNorm and residual connections as in \citep{vaswani2017attention}, trained on linear regression.

Using the recursive formulas from Theorem \ref{theorem:context2weights_skip}, we compute the implicit weight updates $\Delta W^{(i)}$ and bias updates $\Delta b^{(i)}$ for all layers. Figure \ref{figure:multilayer_verifying_dW} confirms that the standard forward pass with the context, $T_{\mathbf{W}, \mathbf{b}}(C,x)$, is numerically indistinguishable from the context-free forward pass with patched weights, $T_{\mathbf{W + \Delta W}, \mathbf{b + \Delta b}}(x)$. The intermediate block outputs agree to a precision of $10^{-6}$, confirming our theoretical framework perfectly captures the mechanics of the forward pass in deep networks.

\begin{figure}[htbp] 
\centering 
\includegraphics[width=1.0\textwidth]{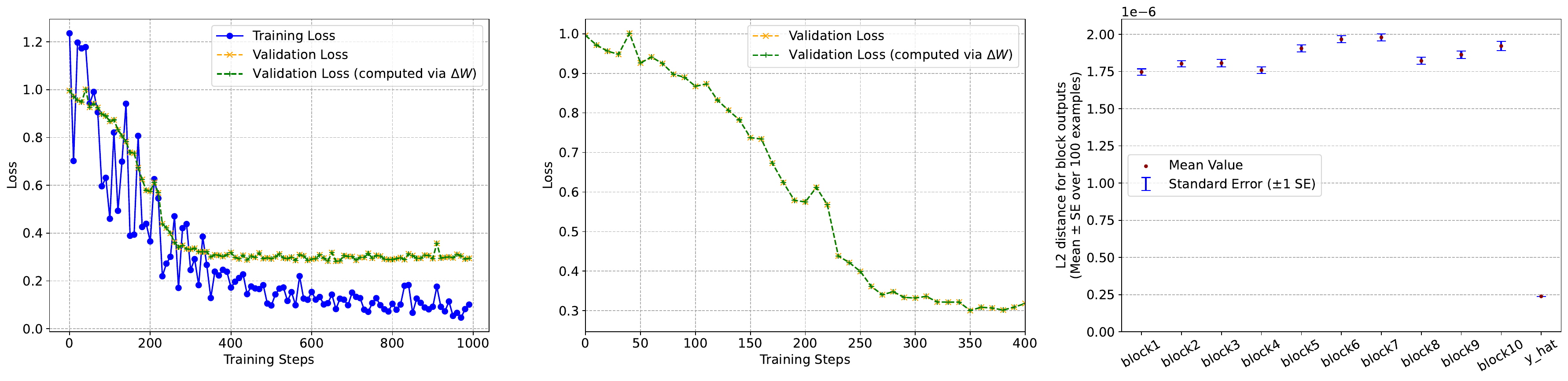}
\caption{\textbf{Numerical Verification in 10-layer Pre-LN Transformer.} \textbf{Left/Middle:} Training and validation loss curves computed via the standard forward pass ($T_W$) versus the patched context-free pass ($T_{W + \Delta W}$). \textbf{Right:} The mean $L_2$-norm difference between intermediate layer outputs across 100 test tasks. Predictions agree to an order of $10^{-6}$.}
\label{figure:multilayer_verifying_dW}
\end{figure}

\subsection{Implicit Dynamics vs. Explicit Finetuning}

To demonstrate that these implicit updates constitute task-relevant learning rather than mere algebraic coincidences, we compare the implicit dynamics of ICL (Prop. \ref{proposition:implicit_learning_dynamics}) against explicit Stochastic Gradient Descent (SGD) finetuning. 

Using a pretrained transformer, we process a context $C_i=[c_1,\dots,c_i]$ with $c_l = (x_l, y_l)^T$ of length $i$, computing the exact implicit weight transfer $\Delta_x W(C_i)$ to predict a query token $z = (x_{query}, 0)^T$. In parallel, we initialize the model with pretrained weights and explicitly finetune the MLP weights via SGD on the same $i$ context examples. As shown in Figure \ref{figure:gd_vs_dW}, both the implicit dynamics and explicit finetuning minimize the test loss in remarkably similar ways. Furthermore, evaluating the normalized Frobenius inner product reveals that the implicit weight updates and explicit SGD gradients remain highly aligned in weight space.

\begin{figure}[h] 
\centering 
\includegraphics[width=0.49\textwidth]{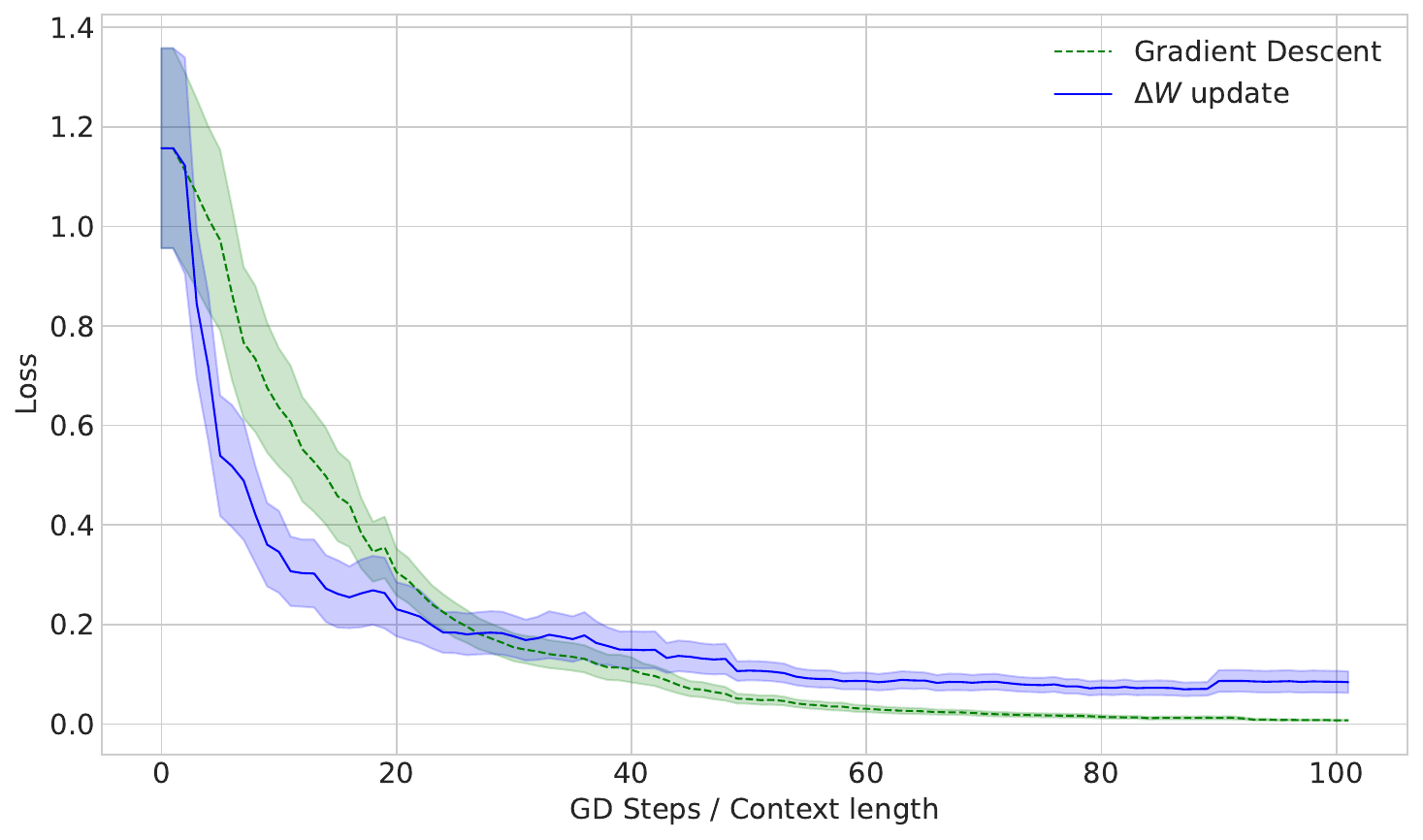}
\includegraphics[width=0.49\textwidth]{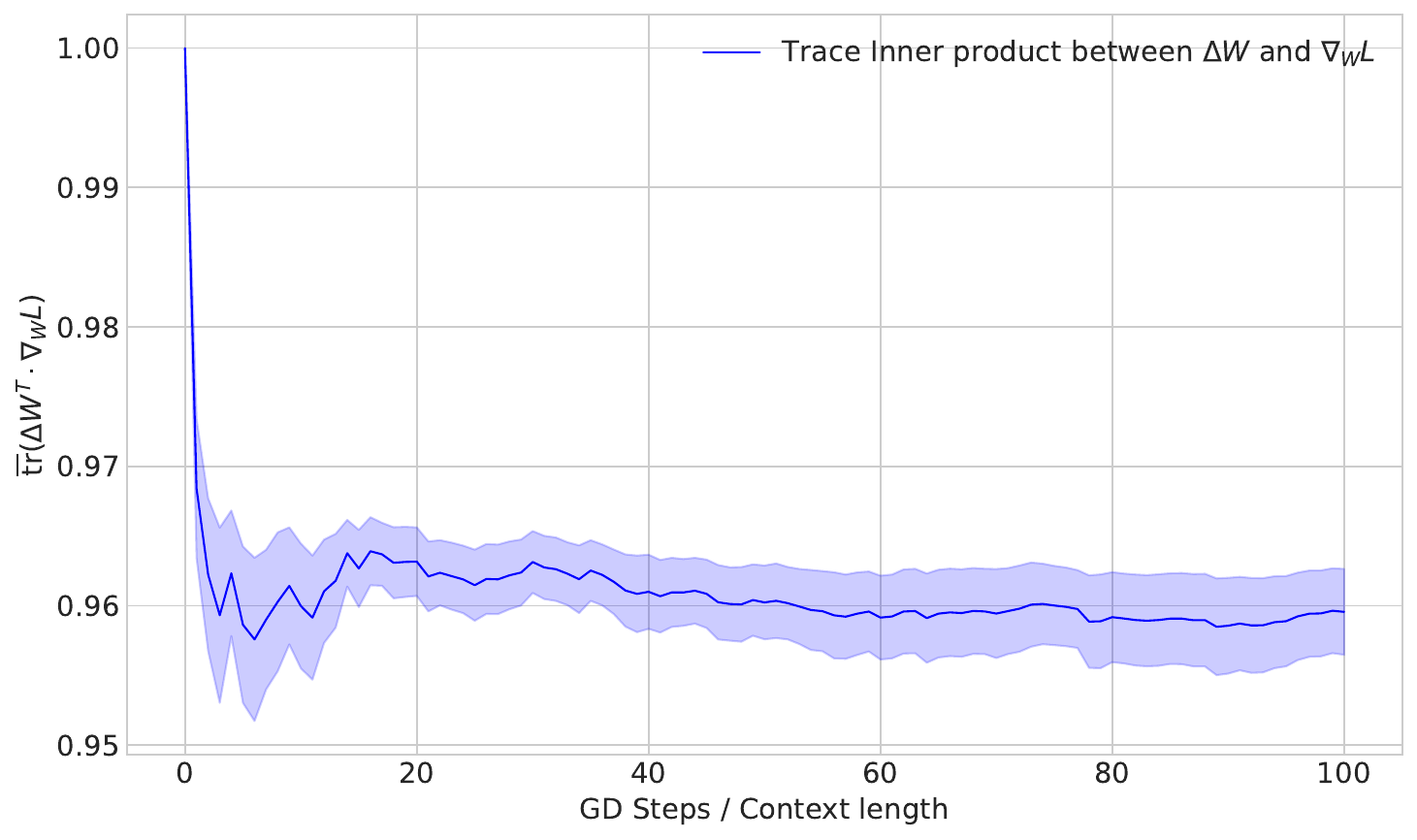}
\caption{Direct finetuning vs implicit weight update. \textbf{Left:} Both finetuning and implicit weight updates minimize the loss in similar ways. \textbf{Right:} The two forms of weight updates remain highly aligned with respect to the normalized Frobenius inner product (i.e., the cosine of the angle between the fine-tuned and the implicit gradients.}
\label{figure:gd_vs_dW} 
\end{figure}

\subsection{Architectural Diagnostics: Attention vs. RNN}
\label{section:architectural_diagnostics}

Because our framework abstracts the core mechanism as a generic ``contextual layer'' interacting with an affine transformation, it serves as a universal diagnostic tool to evaluate different architectures. By swapping the multi-head attention layer for a Recurrent Neural Network (RNN) layer of comparable parameter count, we can analyze why certain architectures excel at ICL while others struggle.
Figure \ref{figure:rnn_verifying_Delta_W} plots the marginal gradient updates $\|\Delta W_{i+1} - \Delta W_i\|_2$ as the context length increases. For Attention-based layers, the implicit gradient updates smoothly decay and converge to zero, indicating stable assimilation of the context. Conversely, the RNN's implicit updates fail to converge, exhibiting highly unstable dynamics. This provides a mechanistic explanation for the empirical observation that standard RNNs struggle with ICL compared to Transformers: they fail to generate convergent implicit learning dynamics in weight space.

\begin{figure}[htbp]
    \centering
    \includegraphics[width=0.48\textwidth]{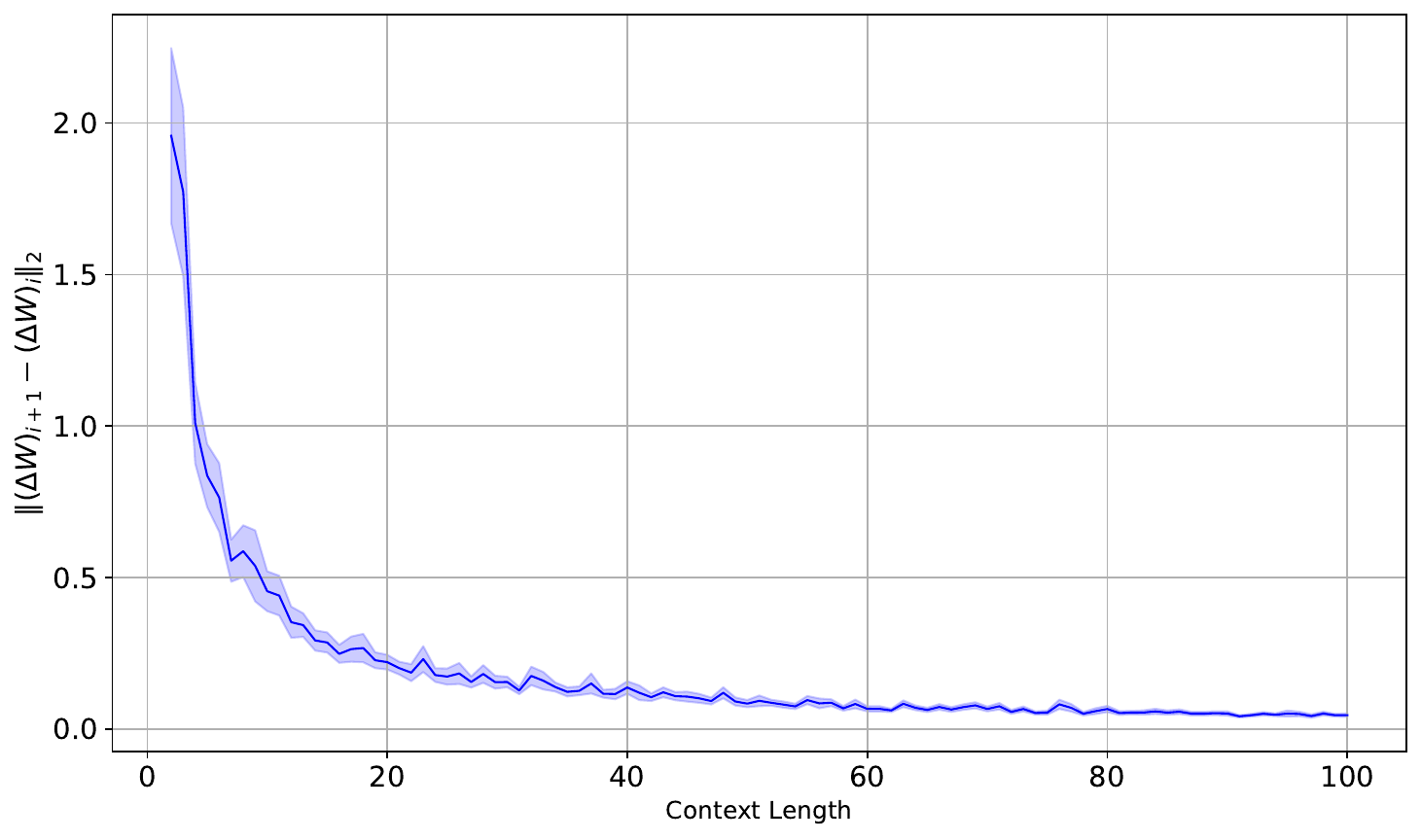}
    \includegraphics[width=0.48\textwidth]{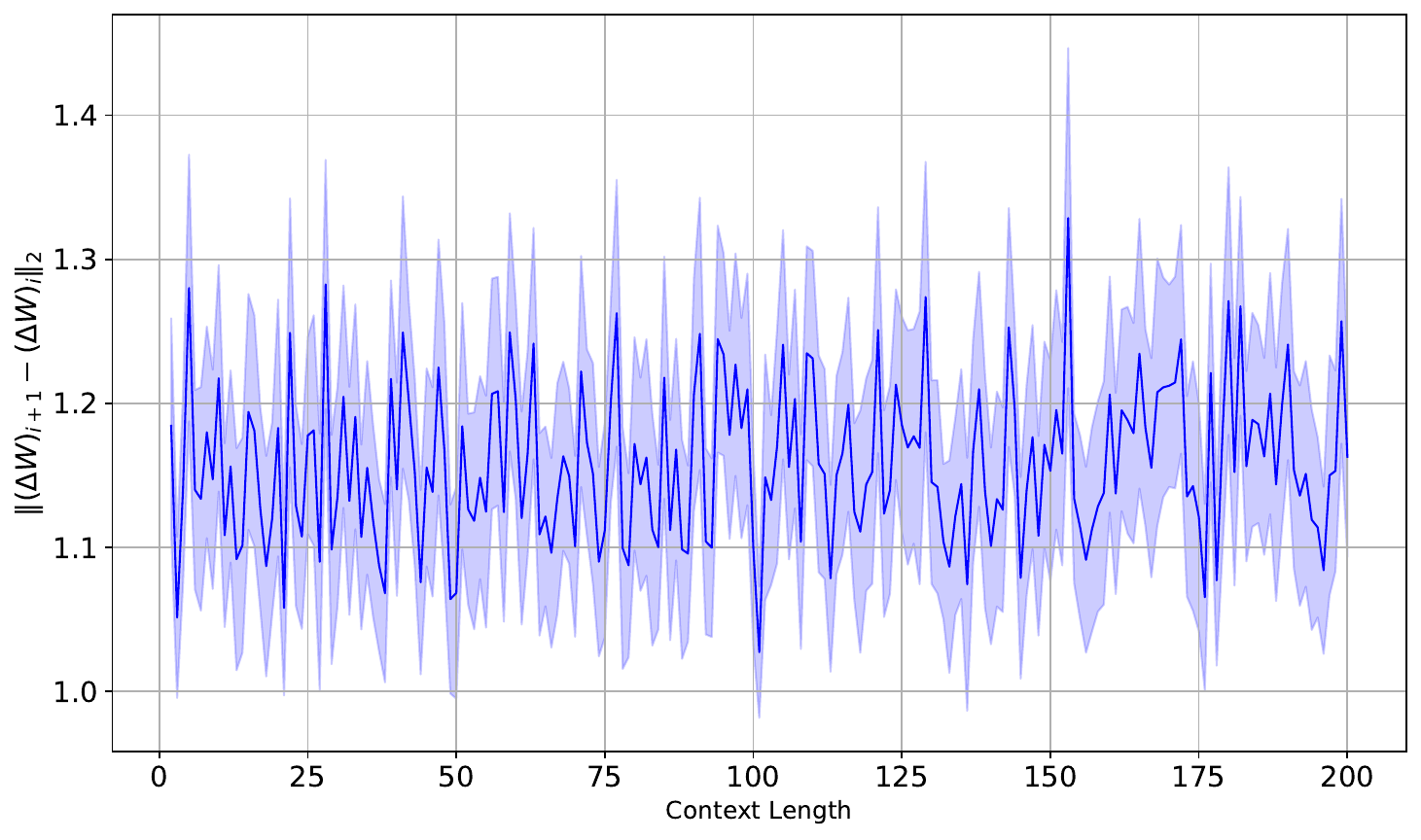} 
    \caption{\textbf{Architectural Diagnostics.} \textbf{Left:} Attention-based contextual layers produce smooth, converging implicit weight updates. \textbf{Right:} RNN-based contextual layers produce highly unstable, non-converging updates, explaining their relative poor ICL performance.}
    \label{figure:rnn_verifying_Delta_W}
\end{figure}

\subsection{Practical Utility: Static Thought Patches via Least-Squares Approximation}
\label{section:compression_experiment}

Although the goal of this paper is mainly theoretical, establishing the mathematical equivalence between context and implicit weight updates, we consider in this section directions toward practical applications of our theory to prompt compression and inference efficiency. Namely, as noted in Theorem \ref{theorem:context2weights}, the exact minimal token-patch $\Delta_x W(C)$ for a given context $C$ is inherently dependent on the specific query token $x$, making this implicit update not readily useful for prompt compression, as the update needs to be recomputed for every new generated token. To overcome this query specific dependence and achieve practical prompt compression (potentially enabling $O(1)$ inference without a KV-cache), we aim to approximate these instance-specific updates with a single static update $\Delta(C)$ for the context $C$.
To achieve this, we define a representative ``calibration set'' consisting of $K$ queries $\{x_k\}_{k=1}^K$ for a given context $C$. We compute the corresponding exact token-patches $\Delta_{x_k} W(C)$ and pre-activations $a_k = A(x_k)$. We then define the static ``Thought Patch,'' $\Delta(C)$, as the solution to the following optimization problem:
\begin{equation}\label{equation:thought_patch_optmization}
\Delta(C) := \arg \min_{\Delta} \sum_{k=1}^K \|\Delta \cdot a_k - \Delta_{x_k}W(C)a_k\|^2_2.
\end{equation}
By solving this objective, we consolidate the diverse token-specific updates into a single weight patch that best approximates the model’s behavior across the entire calibration set of size $K$. Once computed, this $\Delta(C)$ serves as a query-agnostic update; meaning it is defined solely by the context $C$ and maintains its utility when applied to query tokens outside the original calibration set.

\subsubsection{Experimental Setup: Generalization of the Static Thought Patch}
To empirically validate the effectiveness of the static ``Thought Patch'' $\Delta(C)$, we evaluate its performance on a single-layer transformer block trained on linear regression tasks. Specifically, we examine how the accuracy of this query-agnostic approximation converges as the size of the calibration set $K$ increases.

For a fixed context $C$, we define a calibration set $\mathcal{X}_{train}$ of size $K$, where $K \in \{1, 2, 5, 10, 25, 50, 100\}$. We compute the optimal static patch $\Delta(C)$ by solving the optimization problem defined in \ref{equation:thought_patch_optmization}. To assess the generalization capability of $M := \Delta(C)$, we construct a held-out test set $\mathcal{X}_{test}$ of 100 query tokens $x_k$ not contained within $\mathcal{X}_{train}$. To assess the approximation error, we measure the Mean Squared Error (MSE) of the transformer forward pass using the static patch $\Delta(C)$ compared to the ground-truth token-patches $\Delta_x W(C)$. We report the MSE separately for the calibration set $\mathcal{X}_{train}$ (training error) and the held-out test set $\mathcal{X}_{test}$ (test error); see Figure \ref{figure:generalization_gap_thought_patch} (right). To assess output convergence as $K$ increases, we also measure the fidelity of the patch-based forward pass; see Figure \ref{figure:generalization_gap_thought_patch}  (left). That is, we compute the discrepancy between the full forward pass of the original model output for queries $q$ with the context $C$ and compare with the model output for queries $q$ alone but with with weights modified by $\Delta(C)$; i.e., 
$$
\frac{1}{K} \sum_{x_k \in \mathcal{X}_{\{train, test\}}} \left| T_{W}(C, x_k) - T_{W+\Delta(C)}(x_k) \right|.
$$

We note that as $K$ increases, there is a consistent narrowing of the gap between performance on the calibration set and the test set. The convergence of these metrics demonstrates that the static patch $M = \Delta(C)$ transitions from an overfit solution for specific queries (at low $K$) to a robust, query-agnostic approximation that generalizes to unseen tokens (for large values of $K$). This confirms that the ``Thought Patch'' successfully captures the structural influence of the context $C$ on the transformer's weights.

\begin{figure}[h]
    \centering
    \includegraphics[width=0.45\textwidth]{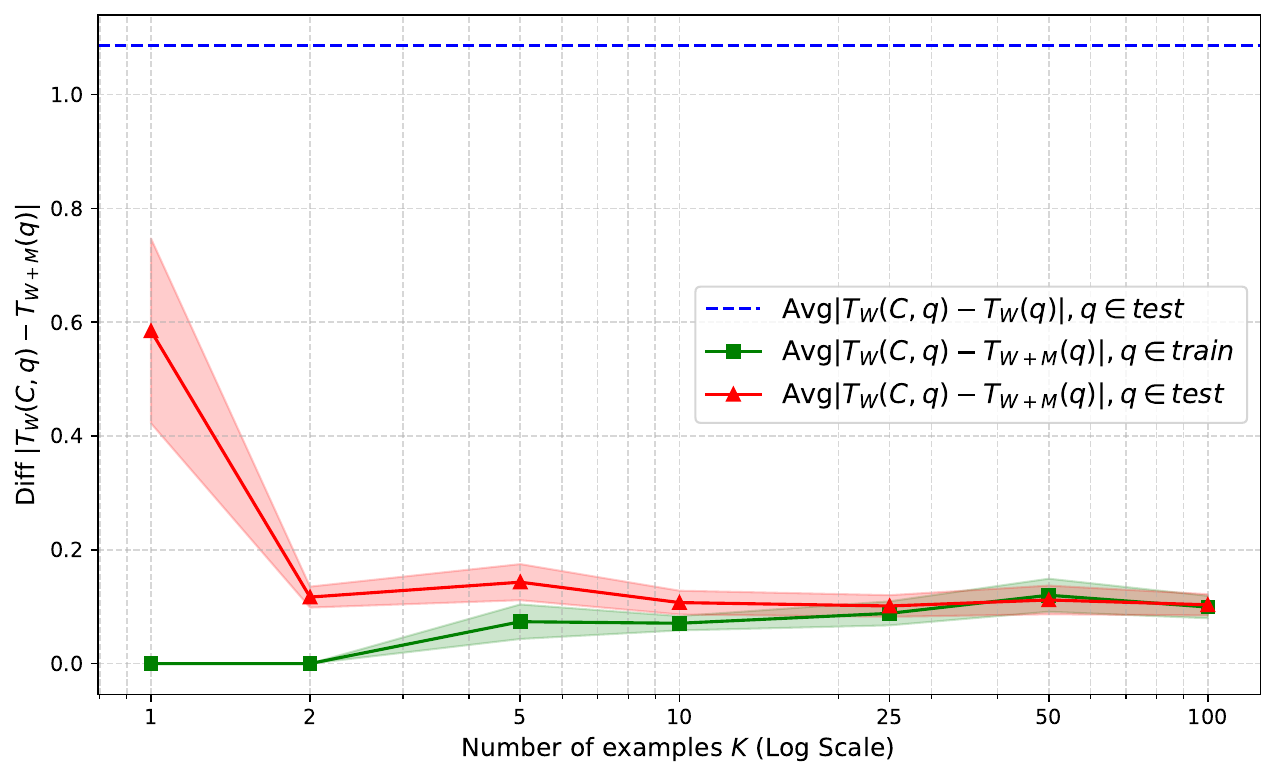}
    \includegraphics[width=0.45\textwidth]{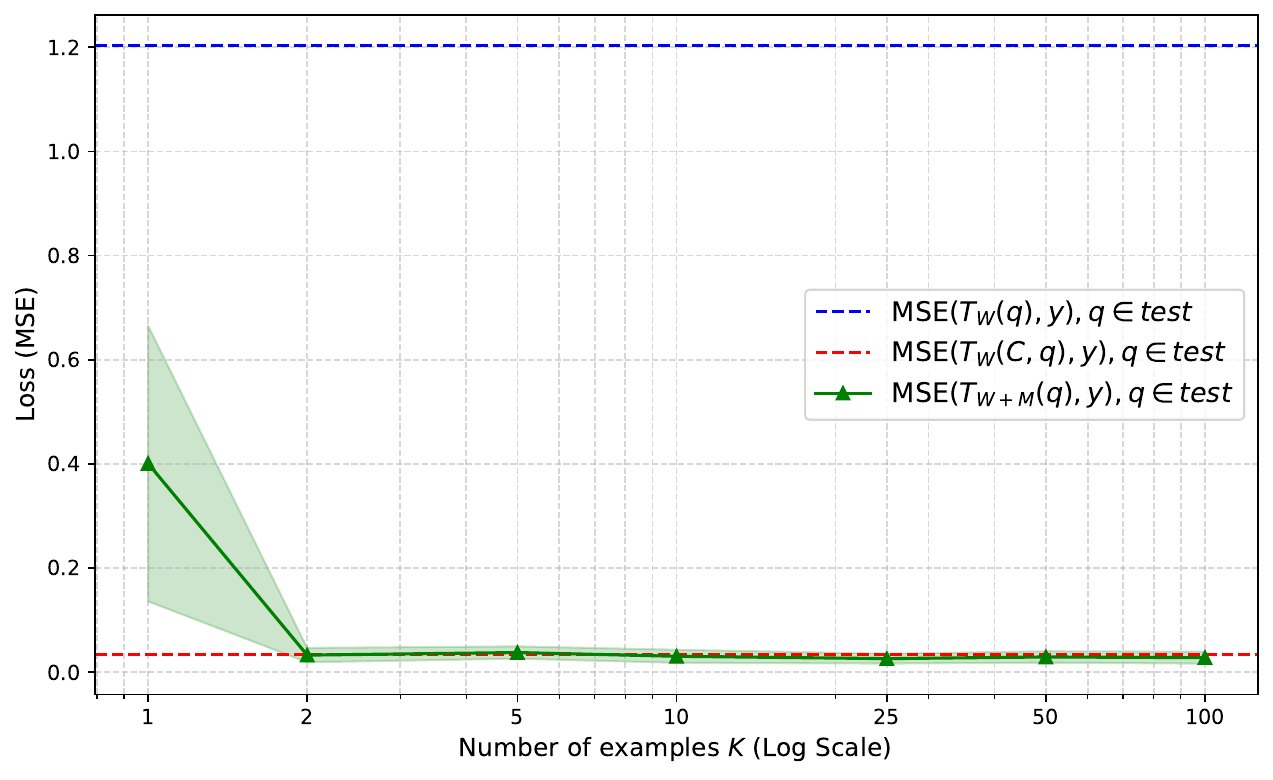}
    \caption{\textbf{Generalization of the static ``Thought Patch'' $\Delta(C)$ across calibration set sizes $K$} . \textbf{Left:} Mean absolute difference between the full context-dependent forward pass $T_W(C, q)$ and the patched forward pass $T_{W+M}(q)$, here $M:= \Delta(C)$. As $K$ increases, the patched output converges to the full forward pass for both training and held-out test queries. \textbf{Right:} The Mean Squared Error (MSE) for the static patch effectively approximates the performance of the full context-aware model, approaching the accuracy of $T_W(C, q)$ (red dashed line) even for unseen test queries as $K$ grows.}\label{figure:generalization_gap_thought_patch}
\end{figure}

\section{Conclusion and future directions}

Our approach to uncovering the transformer's in-context learning mechanics improves upon previous methods in that it does not put any restrictions on the self-attention layer architecture. While earlier theoretical works have also derived a similar form of implicit learning dynamics, these did so only under limiting assumptions on the self-attention layer, such as requiring linear attention or a single head as well as specific forms of prompts; see \cite{vonOswald2023TransformersLI}, \cite{dai2023why}, and \cite{huang2025transformers}, see also \citep{shen2024position, deutch2024incontext}. In contrast our main Theorems (Thm. \ref{theorem:context2weights} and Thm. \ref{theorem:context2weights_skip}) remain valid if the self-attention layer is switched by other forms of contextual layers, such an RNNs, state space models, or any layer that can take an input and optionally a context. This is surprising because our analysis seems to hint that ICL is not so much about the internals of self-attention, but rather about the fact that regular neural networks can transfer modification of input space to their weight structure. 
Although this is true, and while all contextual layers lead to an implicit weight dynamics as outlined in Prop. \ref{proposition:implicit_learning_dynamics} and Appendix \ref{appendix:alternative_ICL_dynamics}, not all implicit dynamics are equal, and, here, the architecture matters: For instance, our diagnostic experiment shows superior convergence for attention-based contextual layer over RNN-based ones (see Section \ref{section:architectural_diagnostics}.

We conclude by outlining five domains where our theoretical framework may offer significant implications beyond a mechanistic explanation of in-context learning.

First, our results (Theorems \ref{theorem:context2weights} and \ref{theorem:context2weights_skip}) provide a unifying theory connecting ICL to model editing. We demonstrate that \emph{steering vectors} \citep{ilharco2022editing, hendel2023context, todd2023function} and \emph{rank-1 factual model edits} \citep{meng2022locating, mitchell2022memory} naturally emerge as distinct aspects of a single phenomenon: the implicit low-rank weight modulation induced by context. This suggests that these heuristic editing techniques are actually intervening on the same fundamental mechanism that the model uses for self-adaptation.

Second, factorization formulas such as Eq.~\ref{equation:factorization_formula} in Appendix \ref{appendix:extensions}, derived from Theorem \ref{theorem:context2weights}, rigorously map prompt segments to linear operators and textual concatenation to operator composition. We believe that inspecting the algebraic properties of these operators—such as invertibility and commutativity—lays the groundwork for a formal theory of prompt engineering, moving beyond trial-and-error to a principled understanding of prompt interaction.

Third, because our theory enables the extraction of the exact meta-gradient associated with generation (Proposition \ref{proposition:implicit_learning_dynamics}), it provides a novel tool for mechanistic interpretability. Monitoring these gradients dynamically could yield valuable insights into generation health, potentially serving as an early detection signal for hallucinations or mode collapse.

Fourth, analyzing ICL dynamics across different architectures suggests a new direction for model design. By evaluating how different ``contextual layers'' facilitate or hinder implicit weight updates, our framework can guide the development of more efficient architectures optimized specifically for in-context adaptability.

Finally, our results highlight a fundamental distinction between the dynamic nature of attention-based inference and the static nature of standard fine-tuning. As noted in our comparison with \cite{chen2024exact}, our derived weight updates $\Delta_x W$ depend on the specific query token $x$, suggesting that for general transformer blocks, the context cannot be \emph{exactly} represented by a single, fixed weight update. In Section~\ref{section:compression_experiment}, we show how these token-dependent updates can be aggregated---via a least-squares approximation---to produce a single, static weight update (a \emph{thought patch}) that approximates the context for any input. Developing and scaling such approximations bridges the gap between our exact mechanistic description of inference and practical techniques for prompt compression and efficient context reuse.

\section*{Acknowledgements}
We would like to thank Corinna Cortes, Sanjiv Kumar, Michael Riley, Sergei Vassilvitskii, Sammy Jerome, Peter Bartlett, Spencer Frei, Dilan Gorur, Aranyak Mehta, Daniel Ramage, and Mike Mozer, and Blaz Bratanic for helpful discussions, suggestions, and feedback during the development of this work.

\clearpage
\newpage

\clearpage
\newpage

\appendix

\section{Related work}
\label{section:related_work}

\paragraph{In-Context Learning.} 
Large language models have the capability to adapt their output based on information or examples provided in the prompt.
Because this occurs during inference, there are no explicit gradient updates or modifications of the model parameters. This emergent capability is called in-context learning (ICL) and it has already been shown to exist for GPT-3 in \cite{brown2020language} for a wide range of NLP tasks. 

Many works have investigated the behavior of ICL to better understand its underlying mechanisms, often through the lens of meta-learning, or learning-to-learn \citep{schmidhuber1987evolutionary,hochreiter2001learning, kirsch2021meta}. A central question within this research area revolves around the precise nature of the ``learning'' that takes place during ICL and whether ICL represents genuine few-shot learning or instead serves as a mechanism for task-specific inference steering.
For instance, the authors of \cite{reynolds2021prompt} question whether true learning occurs at inference time in ICL, contending that the in-context examples instead help the model retrieve capabilities which were already learned during pretraining. This suggests that no new learning actually takes place at inference time. Specifically, \cite{xie2022an} argues that the examples in the prompt serve only as a form of Bayesian conditioning rather than true learning, and they formalize ICL as Bayesian inference. Supporting this direction, \cite{min2022rethinking} shows that replacing example labels with random labels does not dramatically decrease ICL performance, which bolsters the argument that pretrained capabilities are retrieved from the prompt. 
Though, revisiting these ideas, \cite{wei2024larger} show that these results may in fact vary depending on model size and that larger models do start to actually learn from switched labels within the prompt. Still others (e.g., \cite{raventos2023pretraining}), claim that the emergence of true ICL in large language models seems to be dependent on data diversity during pretraining.

\paragraph{Gradient Descent and Meta-Optimization.}
A prominent hypothesis is that ICL performs a type of meta-optimization or implicit gradient descent, essentially finetuning the model through the forward pass \citep{garg2022can,akyrek2023what,vonOswald2023TransformersLI,dai2023why,ahn2023transformers,zhang2024trained}. 
Building on this, our work investigates how ICL is implemented through implicit weight updates that correspond to the underlying learning dynamics of the transformer.
Many theoretical analyses of these learning dynamics rely on simplifying assumptions such as single-head, linear attention transformers and prompts formatted as input-output examples determined by a fixed function class; e.g., linear regression \citep{vonOswald2023TransformersLI,ahn2023transformers,zhang2024trained}.
In particular, \cite{vonOswald2023TransformersLI} shows that linear transformers trained on ICL tasks learn to perform updates analogous to gradient descent. Both \cite{zhang2024trained} and \cite{ahn2023transformers} show that transformers can acquire gradient-based algorithms through ICL and prove global convergence to the optimum for tasks like linear regression within this simplified framework.
While the formal analysis of \cite{dai2023why} is also derived for linear attention, their empirical results instead focus on large GPT transformers trained on structured language tasks. However, their core conclusion that ICL operates as implicit finetuning resonates strongly with the viewpoint we take on here. 
Other works investigating the link between ICL and gradient descent for standard transformers have largely focused on prompts structured as input-output pair examples. For instance, \cite{garg2022can} demonstrates that standard transformers can in-context learn diverse function classes from such examples, achieving performance comparable to least squares, while \cite{akyrek2023what} shows they can emulate explicit learning algorithms like gradient descent and ridge regression. 
This aligns with a broader line of work studying ICL as internal algorithm emulation. For instance, \cite{bai2023transformers} demonstrates that transformers can act as statisticians by provably selecting and emulating appropriate algorithms in-context, while \cite{hu2026incontext} explores how models emulate procedures like spectral methods. Furthermore, recent work by \cite{wu2025incontext} extends the study of in-context deep learning beyond linear models, investigating how transformers implement complex learning processes during inference. 
Separately, attempts to develop a theory for ICL in standard transformers without restricting prompt structure, such as the work by \cite{liu2025decipheringtrajectoryaidedllmreasoning}, have thus far needed to incorporate other architectural simplifications or analytical approximations to make the analysis tractable.

\paragraph{Task Vectors and Model Editing.}
The concept of a task vector in machine learning was first introduced in \cite{ilharco2022editing} to describe a direction in a model’s weight space that encodes task-specific information. There task vectors are derived from the difference between pretrained and finetuned model weights and the authors show how these vectors could be arithmetically manipulated to effectively steer a model's output. The term has since been expanded to include vectors applied to a model's activations as well and several studies have sought to quantify the effect of ICL by analyzing its influence on both weight task vectors and activation task vectors; notably \cite{mitchell2022memory,meng2022locating,hendel2023context}. Similarly, \cite{todd2023function} identify ``function vectors'' (FVs) in transformer hidden states, which are compact representations of in-context learned tasks, extracted via causal mediation analysis over specific attention head outputs. These FVs are shown to be causally effective in triggering task execution even in novel contexts, and distinct from simple semantic offsets, suggesting they act as higher-level function references within the model.
Theoretically our work aligns with and extends these ideas. Namely, we demonstrate that the effect of the context can be precisely mapped to an update of the transformer's parameters. Specifically, we show that this effect can be realized as a direct modification of the feedforward weights. In architectures with residual connections, this also includes an additive bias modification (Theorem \ref{theorem:context2weights_skip}) in the final layer, which is functionally equivalent to adding an activation task vector, connecting our weight-centric view with activation-based model editing perspectives.

Among these, the work of \cite{hendel2023context} is particularly relevant and closely related to our own. They show that a transformer maps in-context examples to an ``activation task vector'' that encodes the underlying rule of the examples provided in the prompt. Similar to our main result, they find that manually adding this task vector to the model's hidden states during inference on a new input (without demonstrations) produces outputs similar to those obtained by manually modifying activations with that task vector. While their work offers a mechanistic view of ICL, our approach differs by theoretically deriving the specific weight and bias modifications equivalent to processing a prompt in-context. We prove that a transformer modified by this weight adjustment yields outputs on new inputs that are identical to the original model's outputs when provided with the in-context demonstrations.
Finally, our results offer a theoretical counterpoint to recent efforts in "context compression," specifically the work of \cite{chen2024exact}. They investigate whether In-Context Learning (ICL) can be explicitly converted into model weights for linear attention transformers. Crucially, they prove that for standard architectures, such exact conversion is mathematically impossible. To circumvent this, they propose a modified architecture that adds special bias terms to the attention layers, allowing the context to be compressed into these new parameters. Our Theorem \ref{theorem:context2weights} demonstrates that while attention weights may resist such exact compression in standard architectures, the MLP layers do not. We show that the effect of the context can be mapped exactly to a low-rank update of the MLP weights in standard transformers, suggesting that the feedforward network acts as a natural reservoir for context-dependent weight adaptation.

The low-rank nature of our ICL-induced weight update appears in other works which explore techniques for explicit model editing. Most notably, ROME (rank-1 Model Editing) in \cite{meng2022locating} injects factual associations into transformers by applying targeted rank-1 updates to feedforward network weight matrices. Similarly, MEND in \cite{mitchell2022memory} learns optimized low-rank decompositions for model edits. While these methods engineer or learn low-rank modifications for explicit model editing, our theoretical results show that these rank-1 updates to feedforward weights naturally arise as the mechanism by which transformers implement in-context learning. See also other related works which explore how model editing (either through modification of weights or model activations) can be used to achieve the results of finetuning without any gradient-based learning \citep{subramani2022extracting, panickssery2023steering, Li2023inference, zou2023representation, Liu2024IncontextV, todd2024function,uppaal2024model,yang2025task}.

\section{Proofs of the results}
\label{appendix:proofs}

\subsection{Minimal update proofs}
\label{appendix:proofs_minimal_update}

In this subsection we collect the proofs of the main results from Sections~\ref{section:contextual_blocks} and~\ref{section:implicit_learning_dynamics}, as well as supplementary results characterizing the space of equivalent token-patches.

\subsubsection*{Proof of Theorem~\ref{theorem:context2weights}}

For the reader's convenience, we restate the theorem.

\begin{theorem}[Restatement of Theorem~\ref{theorem:context2weights}]
Consider a contextual block $T_W=M_W\circ A$ formed by a contextual layer $A$ composed with a neural network $M_W$ whose first fully-connected layer has weight matrix $W$. Given a context $C$ and an input $x \in C\setminus Y$, provided that $A(C\setminus Y, x) \neq 0$, there exists a unique minimal token-patch $\Delta_x W(Y)$. In other words, among all matrices that achieve the functional equivalence
\begin{equation*}
T_W (C,x) = T_{W + \Delta_x W(Y)}(C\setminus Y, x) 
\end{equation*}
via pre-activation matching, $\Delta_x W(Y)$ uniquely possesses the minimal Frobenius norm $\|\Delta_x W(Y)\|_F$. It is given by the exact formula:
\begin{equation*}
\Delta_x W(Y) = \frac{(W \delta A_x(Y)) A(C\setminus Y, x)^T}{\|A(C\setminus Y, x)\|^2},
\end{equation*}
where $\delta A_x(Y) := A(C,x) - A(C\setminus Y, x)$. 
Furthermore, this unique minimal displacement update is conservative: it preserves the original weights $W$ in all directions orthogonal to $A(C\setminus Y,x)$. Specifically, for any vector $u$ such that $u^T A(C\setminus Y, x) = 0$, we have $(W + \Delta_x W(Y))u = Wu$.   
\end{theorem}

\begin{proof}
Let $v = A(C\setminus Y, x)$ and $\delta = W\delta A_x(Y)$. We seek to find the matrix $M$ that minimizes the squared Frobenius norm $\frac{1}{2}\|M\|_F^2$ subject to the linear constraint $Mv = \delta$. Introducing a vector of Lagrange multipliers $\lambda \in \mathbb{R}^d$, we define the Lagrangian:
\begin{equation}
\mathcal{L}(M, \lambda) = \frac{1}{2} \operatorname{Tr}(M^T M) + \lambda^T (\delta - Mv).
\end{equation}
Taking the gradient with respect to the matrix $M$ and setting it to zero yields:
\begin{equation}
\nabla_M \mathcal{L} = M - \lambda v^T = 0 \implies M = \lambda v^T.
\end{equation}
Substituting this rank-1 form back into our constraint $Mv = \delta$, we obtain:
\begin{equation}
(\lambda v^T) v = \delta \implies \lambda \|v\|^2 = \delta \implies \lambda = \frac{\delta}{\|v\|^2}.
\end{equation}
Thus, the unique matrix minimizing the Frobenius norm is $\Delta_x W(Y) = \frac{\delta v^T}{\|v\|^2}$, which matches the formula provided in the theorem. Because the objective function is strictly convex and the constraint is affine, the KKT conditions guarantee this stationary point is the unique global minimum.

To demonstrate the conservative nature of the update, consider any vector $u$ orthogonal to $v$ (i.e., $v^T u = 0$). Applying the patched weights to $u$:
\begin{equation}
(W + \Delta_x W(Y))u = Wu + \frac{\delta (v^T u)}{\|v\|^2} = Wu + 0 = Wu.
\end{equation}
Therefore, the update leaves the model's behavior strictly unchanged for all inputs orthogonal to the current query representation.
\end{proof}

\subsubsection*{Characterization of equivalent rank-1 updates}

The definition of the contextual block relies on a functional mapping, which introduces a degree of non-uniqueness in the underlying weight parameters. For a given query representation $v_x := A(C\backslash Y, x)$ and a context-induced shift $\delta_x := W\dA_x(Y)$, any update $\Delta'_x = \Delta_x + M_x$ is functionally equivalent provided $M_x v_x = 0$. This general formulation is unnatural as $M_x$ typically increases the rank of the update and requires an arbitrary choice of a null-space matrix. If we restrict our focus to parsimonious rank-1 updates, the following proposition characterizes the entire equivalence class.

\begin{proposition}[Characterization of Equivalent Rank-1 Updates]
\label{proposition:equivalent_rank1}
A rank-1 matrix $\Delta'_x$ satisfies the contextual mapping identity $(W + \Delta'_x) v_x = W A(C,x)$ if and only if it takes the form:
\begin{equation}
\Delta'_x = \frac{\delta_x w^T}{\langle w, v_x \rangle}
\end{equation}
for some vector $w \in \mathbb{R}^d$ such that $\langle w, v_x \rangle \neq 0$. Furthermore, the choice $w = v_x$ yields the unique update with the minimal Frobenius norm $\|\Delta'_x\|_F$.
\end{proposition}

\begin{proof}
By the definition of the contextual block, the update must satisfy $\Delta'_x v_x = \delta_x$. For a rank-1 matrix $\Delta'_x = u v^T$, this implies $u (v^T v_x) = \delta_x$. Thus $u$ must be collinear with $\delta_x$, specifically $u = \delta_x / \alpha$ where $\alpha = v^T v_x \neq 0$. Setting $v = w$ yields the required form. 

To minimize the Frobenius norm $\|\Delta'_x\|_F = \|u\|_2 \|w\|_2$, we minimize $\frac{\|\delta_x\|_2 \|w\|_2}{|\langle w, v_x \rangle|}$. By the Cauchy-Schwarz inequality, $|\langle w, v_x \rangle| \leq \|w\|_2 \|v_x\|_2$, with equality if and only if $w$ is collinear with $v_x$. Thus, the norm is minimized when $w = v_x$.
\end{proof}

\subsection{Implicit dynamics proof}

\subsubsection*{Proof of Proposition~\ref{proposition:implicit_learning_dynamics}}

For the reader's convenience, we restate the proposition.

\begin{proposition}[Restatement of Proposition~\ref{proposition:implicit_learning_dynamics}]
In the notation of Section~\ref{section:implicit_learning_dynamics}, the iterative process of weight updates can be realized as a form of stochastic gradient updates
\begin{equation*}
W_{i+1} = W_{i} - h \nabla_W L_{i}(W_{i}) 
\end{equation*}
with learning rate given by $h = 1/\|A(x)\|^2$ and loss at step $i$ given by 
\begin{equation*}
L_{i}(W) = \operatorname{trace}(\Delta_i^T W)
\end{equation*}
where 
$$\Delta_{i} = W_0\Big(A(c_1,\dots, c_i,x) - A(c_1, \dots, c_{i+1}, x)\Big)A(x)^T.$$
\end{proposition}

\begin{proof}
Considering the sequence of $W_i$'s as defined in Eq.~\ref{equation:implicit_weight_sequence} and Eq.~\ref{equation:Delta_W_Y}, the difference $D_i = W_{i+1} - W_i$ between two implicit iterates equals
\begin{eqnarray*}
D_i
&=& \Delta_x W_0(c_1, \dots, c_{i+1}) - \Delta_x W_0(c_1, \dots, c_i) \\\\
&=& 
\frac{W_0 \Big(A(c_1,\dots, c_{i+1},x) - A(c_1,\dots, c_{i},x) \Big) A(x)^T}{\|A(x)\|^2}
\\\\
&=& - h  \Delta_i,
\end{eqnarray*}
where we set $h := 1/\|A(x)\|^2$ and 
$$\Delta_i := W_0\Big(A(c_1,\dots, c_i,x) - A(c_1, \dots, c_{i+1}, x)\Big)A(x)^T.$$

This means that
\begin{equation}
W_{i+1} = W_i - h \Delta_i 
= W_i - h \nabla_W \operatorname{trace}(\Delta_i^T W),
\end{equation}
since in general we have $\nabla_W \operatorname{trace}(A^TW) = A$.
\end{proof}

\section{Extensions of the Theory}
\label{appendix:extensions}

\subsection{Contextual blocks with skip-connections and multiple stacked blocks}

We now consider the case of contextual blocks with skip connections, encompassing the standard Pre-LN transformer block as for instance described in \cite{he2024simplifying}.

\begin{definition} A contextual block $T$ with a skip connection is a layer of the form
\begin{equation}
T_{W,b'}(C,x) = \LayerNorm\Big(A(C,x) + W' g_\theta(W A(C,x) + b) + b'\Big)
\end{equation}
where $g_\theta$ is any differentiable model parameterized by $\theta$ (or an activation) and $A(C,x)$ is a contextual layer.
\end{definition}

Again, our motivation and prototypical example is taken from the standard transformer architecture, where the contextual layer $A(C,x)$ is a multi-head attention block with a skip connection; i.e.,
$$A(C, x) = \LayerNorm(x + \MultiHeadAttn(C,x)).$$

We can generalize Theorem~\ref{theorem:context2weights} to this architecture. Because of the residual connection, functional equivalence requires two modifications: updating the weight matrix $W$ of the first dense layer to match the pre-activations of the MLP, and updating the bias term $b'$ of the final projection to correct the residual stream. Furthermore, we can extend this result rigorously to a sequence of such blocks stacked together.

\begin{theorem} 
\label{theorem:context2weights_skip}
Consider a neural network $T = T^{(L)} \circ \dots \circ T^{(1)}$ formed by a stack of $L$ contextual blocks $T^{(l)}$ with skip connections as above; i.e.,
\begin{equation}
T^{(l)}_{W^{(l)},b'^{(l)}}(C,x) = \LayerNorm\Big(A^{(l)}(C,x) + W'^{(l)} g^{(l)}_\theta(W^{(l)} A^{(l)}(C,x) + b^{(l)}) + b'^{(l)}\Big)
\end{equation}
where $A^{(l)}$ is a contextual layer and $g^{(l)}_\theta$ is a differentiable model. Given a context $C$, an input $x \in C\setminus Y$, and assuming $A^{(l)}(C\setminus Y, x^{(l-1)}) \neq 0$ for all $l$ (where $x^{(l-1)}$ is the input to the $l$-th block and $x^{(0)} = x$), there exists a unique sequence of minimal token-patches $\Delta_x W^{(l)}(Y)$ and exact bias-patches $\Delta_x b'^{(l)}(Y)$ that achieve strict functional equivalence across the entire stack. 

That is, the output of the full network on the full context is functionally identical to the output of the patched network on the ablated context:
\begin{equation}
\label{equation:Delta_W_Y_skip}
T_{\mathbf{W},\,\mathbf{b'}} (C,x) = T_{\mathbf{W} + \Delta_x \mathbf{W}(Y),\,  \mathbf{b'} + \Delta_x \mathbf{b'}(Y)}(C\setminus Y, x).
\end{equation}

For each layer $l=1, \dots, L$, the exact bias-patch is given by the layer's context vector:
\begin{equation}
\label{equation:Delta_b'_skip}
\Delta_x b'^{(l)}(Y) := \delta A^{(l)}_{x^{(l-1)}}(Y),
\end{equation}
and the minimal token-patch for the weights, unique in that it strictly minimizes the Frobenius norm $\|\Delta_x W^{(l)}(Y)\|_F$ while satisfying pre-activation equivalence, is given by:
\begin{equation}
\label{equation:Delta_W_skip}
\Delta_x W^{(l)}(Y) := \frac{(W^{(l)} \delta A^{(l)}_{x^{(l-1)}}(Y)) A^{(l)}(C\setminus Y, x^{(l-1)})^T}{\|A^{(l)}(C\setminus Y, x^{(l-1)})\|^2},
\end{equation}
where $\delta A^{(l)}_{x^{(l-1)}}(Y) := A^{(l)}(C,x^{(l-1)}) - A^{(l)}(C\setminus Y, x^{(l-1)})$. 

Furthermore, this unique minimal displacement update is conservative at every layer: $(W^{(l)} + \Delta_x W^{(l)}(Y))u = W^{(l)}u$ for all vectors $u$ orthogonal to the layer's query representation $A^{(l)}(C\setminus Y, x^{(l-1)})$.
\end{theorem}

\begin{proof}
We first establish the result for a single layer $l$ given an arbitrary input $z$. We temporarily drop the layer index $l$ for notational clarity.

To achieve functional equivalence, we must match the pre-activation input to $g_\theta$. We seek a matrix $M$ such that $M A(C\setminus Y, z) = W \delta A_z(Y)$. Let $v = A(C\setminus Y, z)$ and $\delta = W \delta A_z(Y)$. We minimize the squared Frobenius norm $\frac{1}{2}\|M\|_F^2$ subject to the linear constraint $Mv = \delta$. By introducing a vector of Lagrange multipliers $\lambda \in \mathbb{R}^d$, the Lagrangian is:
\begin{equation}
\mathcal{L}(M, \lambda) = \frac{1}{2} \operatorname{Tr}(M^T M) + \lambda^T (\delta - Mv).
\end{equation}
Taking the gradient with respect to $M$ and setting it to zero yields $M = \lambda v^T$. Substituting this into the constraint $Mv = \delta$ gives $(\lambda v^T)v = \delta$, which implies $\lambda = \frac{\delta}{\|v\|^2}$. Thus, the unique minimal Frobenius norm token-patch is exactly:
\begin{equation}
\Delta_z W(Y) = \frac{\delta v^T}{\|v\|^2} = \frac{(W \delta A_z(Y)) A(C\setminus Y, z)^T}{\|A(C\setminus Y, z)\|^2}.
\end{equation}

Now, let us evaluate the full contextual block using the patched weights and patched biases on the ablated context $C \setminus Y$:
\begin{eqnarray*}
 T_{W + \Delta_z W(Y),\, b' + \Delta_z b'(Y)}(C\setminus Y, z) 
 &=& \LayerNorm\Big( A(C\setminus Y,z) \\
 && + \, W'g_\theta\left(\left(W+\Delta_z W(Y)\right) A(C\setminus Y,z) + b\right)\\
 && + \, b' + \Delta_z b'(Y) \Big).
 \end{eqnarray*}
By our construction of $\Delta_z W(Y)$, we know $\Delta_z W(Y) A(C\setminus Y,z) = W \delta A_z(Y)$. Substituting this into the pre-activation term:
\begin{eqnarray*}
 \left(W+\Delta_z W(Y)\right) A(C\setminus Y,z) + b &=& W A(C\setminus Y,z) + W \delta A_z(Y) + b \\
 &=& W \left(A(C\setminus Y,z) + \delta A_z(Y)\right) + b \\
 &=& W A(C, z) + b.
\end{eqnarray*}
This perfectly matches the pre-activation of the unpatched network on the full context. Substituting this back, and replacing $\Delta_z b'(Y)$ with its definition $\delta A_z(Y)$, the block output becomes:
\begin{eqnarray*}
 T_{W + \Delta_z W(Y),\, b' + \Delta_z b'(Y)}(C\setminus Y, z) 
 &=& \LayerNorm\Big( A(C\setminus Y,z) + W'g_\theta\Big(W A(C,z) + b\Big) + b' + \delta A_z(Y) \Big) \\
 &=& \LayerNorm\Big( \Big(A(C\setminus Y,z) + \delta A_z(Y)\Big) + W'g_\theta\Big(W A(C,z) + b\Big) + b' \Big).
 \end{eqnarray*}
Because $A(C\setminus Y, z) + \delta A_z (Y) = A(C,z)$ by definition of the context vector, we finally get:
\begin{eqnarray*}
 T_{W + \Delta_z W(Y),\, b' + \Delta_z b'(Y)}(C\setminus Y, z) 
 &=& \LayerNorm\Big( A(C,z) + W'g_\theta\Big(W A(C,z) + b\Big) + b' \Big) \\
 &=& T_{W, b'}(C, z).
\end{eqnarray*}
This confirms exact functional equivalence for a single block. (The conservative adaptation property $(W + \Delta_z W(Y))u = Wu$ for $u \perp v$ follows identically to the proof of Theorem~\ref{theorem:context2weights}.)

We now extend this to the stack of $L$ blocks via induction. Let $x^{(0)} = x$. Define the exact activations of the original network (with full context $C$) as $x^{(l)} = T^{(l)}(C, x^{(l-1)})$ for $l=1, \dots, L$. Similarly, define the activations of the modified network (with ablated context $C \setminus Y$ and updated parameters) as $\tilde{x}^{(l)} = T^{(l)}_{W^{(l)} + \Delta W^{(l)}, b'^{(l)} + \Delta b'^{(l)}}(C \setminus Y, \tilde{x}^{(l-1)})$, with base input $\tilde{x}^{(0)} = x$.

For the base case $l=1$, applying the single-layer result derived above with input $z = x^{(0)} = x$, we have:
\begin{equation*}
\tilde{x}^{(1)} = T^{(1)}_{W^{(1)} + \Delta W^{(1)}, b'^{(1)} + \Delta b'^{(1)}}(C \setminus Y, x^{(0)}) = T^{(1)}(C, x^{(0)}) = x^{(1)}.
\end{equation*}

For the inductive step, assume that the input to the $l$-th layer matches perfectly: $\tilde{x}^{(l-1)} = x^{(l-1)}$. We want to show that the output $\tilde{x}^{(l)}$ also matches $x^{(l)}$. By definition:
\begin{equation*}
\tilde{x}^{(l)} = T^{(l)}_{W^{(l)} + \Delta W^{(l)}, b'^{(l)} + \Delta b'^{(l)}}(C \setminus Y, \tilde{x}^{(l-1)}).
\end{equation*}
Using the induction hypothesis, we substitute $\tilde{x}^{(l-1)}$ with $x^{(l-1)}$:
\begin{equation*}
\tilde{x}^{(l)} = T^{(l)}_{W^{(l)} + \Delta W^{(l)}, b'^{(l)} + \Delta b'^{(l)}}(C \setminus Y, x^{(l-1)}).
\end{equation*}
Applying the exact single-layer equivalence proven above for block $T^{(l)}$ with input $z=x^{(l-1)}$, we obtain:
\begin{equation*}
T^{(l)}_{W^{(l)} + \Delta W^{(l)}, b'^{(l)} + \Delta b'^{(l)}}(C \setminus Y, x^{(l-1)}) = T^{(l)}(C, x^{(l-1)}) = x^{(l)}.
\end{equation*}
Thus, $\tilde{x}^{(l)} = x^{(l)}$ for all $l=1, \dots, L$. In particular, the final output of the stack is perfectly preserved: $\tilde{x}^{(L)} = x^{(L)}$, concluding the proof.
\end{proof}

\subsection{An alternative implicit learning dynamics of ICL}
\label{appendix:alternative_ICL_dynamics}

In this section, we describe an alternate view on the implicit learning dynamics which follow from an iterative application of Theorem~\ref{theorem:context2weights}. 

This approach differs in that it interprets how each context token input of a transformer affects the contextual block output. It's based on the idea that the influence of each context token on the model's output can be seen as an implicit change in its behavior. While the transformer's weights are not actually updated as it generates a response, the final output is effectively the same as if the model had undergone a rapid learning process influenced by the context. We will now describe this implicit learning dynamic.

Starting with the initial weight $W_0$ for the first dense layer of the neural network, we have
\begin{equation}
T_{W_0}(c_1, \dots, c_n, x) = T_{W_0 + \Delta W_0(c_1)}(c_2, \dots, c_n, x)
\end{equation}
which gives us the first weight update corresponding on the effect of token $c_1$ on the first-layer weight matrix:
\begin{equation}
W_1 = W_0 + \frac{(W_0 \Delta A(c_1)) A(c_2,\dots, c_n, x)^T}{\|A(c_2, \dots, c_n, x)\|^2} 
\end{equation}
If we continue this process iteratively, we obtain the next weight update corresponding to the consumption of the second token:
\begin{equation}
T_{W_1}(c_2, \dots, c_n, x) = T_{W_1 + \Delta W_1(c_2)}(c_3, \dots, c_n, x)
\end{equation}
which yields 
\begin{equation}
W_2 = W_1 + \frac{(W_1 \Delta A(c_2)) A(c_3,\dots, c_n, x)^T}{\|A(c_3, \dots, c_n, x)\|^2} 
\end{equation}

We can summarize this iterative process of implicit weight updates for each successive token:

\begin{corollary}
\label{corollary:sequential_update}
    In the notation above, the iterative process of weight updates 
    \begin{equation}
    W_i = W_{i-1} + \frac{(W_{i-1} \Delta A(c_i)) A(c_{i+1},\dots, c_n, x)^T}{\|A(c_{i+1}, \dots, c_n, x)\|^2} 
    \end{equation}
    starting with the initial weights of the first dense layer $W_0$ models the transfer of information from the prompt token $c_i$ into the contextual block weights: Namely, we have that
    \begin{equation}
    T_{W_i}(c_{i+1}, \dots, c_n, x) = T_{W_0}(c_1, \dots, c_n, x),
    \end{equation}
    for all $i=1,\dots, n$ with $\Delta A(c_i) = A(c_i, \dots, c_n, x) - A(c_{i+1}, \dots, c_n, x)$.
\end{corollary}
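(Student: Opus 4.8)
The plan is to prove the identity by induction on $i$, peeling off one context token at a time and invoking Theorem \ref{theorem:context2weights} at each step, with the key observation that the theorem applies to a contextual block with \emph{any} weight matrix on its first dense layer. For the base case $i=1$, I would apply Theorem \ref{theorem:context2weights} to the block $T_{W_0}$ with full context $C=[c_1,\dots,c_n]$, query token $x$, and removed portion $Y=\{c_1\}$, so that $C\backslash Y=[c_2,\dots,c_n]$. The theorem then gives $T_{W_0}(c_1,\dots,c_n,x)=T_{W_0+\Delta W_0(c_1)}(c_2,\dots,c_n,x)$ with $\Delta W_0(c_1)=(W_0\Delta A(c_1))A(c_2,\dots,c_n,x)^T/\|A(c_2,\dots,c_n,x)\|^2$ and $\Delta A(c_1)=A(c_1,\dots,c_n,x)-A(c_2,\dots,c_n,x)$, which is exactly the defining formula for $W_1$ in the corollary; hence $T_{W_1}(c_2,\dots,c_n,x)=T_{W_0}(c_1,\dots,c_n,x)$.

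For the inductive step, I would assume $T_{W_{i-1}}(c_i,\dots,c_n,x)=T_{W_0}(c_1,\dots,c_n,x)$ and apply Theorem \ref{theorem:context2weights} once more, this time to the contextual block $T_{W_{i-1}}$ (the network $f_\theta$ and bias $b$ being untouched, only the first-layer matrix changing), with context $[c_i,\dots,c_n]$, query $x$, and removed portion $Y=\{c_i\}$, so that the remaining context is $[c_{i+1},\dots,c_n]$. The theorem yields $T_{W_{i-1}}(c_i,\dots,c_n,x)=T_{W_{i-1}+\Delta W_{i-1}(c_i)}(c_{i+1},\dots,c_n,x)$ with $\Delta W_{i-1}(c_i)=(W_{i-1}\Delta A(c_i))A(c_{i+1},\dots,c_n,x)^T/\|A(c_{i+1},\dots,c_n,x)\|^2$ and $\Delta A(c_i)=A(c_i,\dots,c_n,x)-A(c_{i+1},\dots,c_n,x)$. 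The right-hand-side weight is precisely $W_i$, so chaining with the inductive hypothesis gives $T_{W_i}(c_{i+1},\dots,c_n,x)=T_{W_0}(c_1,\dots,c_n,x)$, which closes the induction.

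The only delicate point — and the one I would spell out explicitly — is the bookkeeping of which context and which weight matrix feed each application of Theorem \ref{theorem:context2weights}: at step $i$ the role of ``$C$'' in the theorem is played by the tail $[c_i,\dots,c_n]$, the role of ``$x$'' by the fixed query token, the role of ``$W$'' by $W_{i-1}$, and the role of ``$Y$'' by the singleton $\{c_i\}$. Under these identifications the quantity $\Delta A(c_i)$ defined in the corollary coincides verbatim with the context vector $\Delta A(Y)=A(C,x)-A(C\backslash Y,x)$ of the theorem, so no new computation is needed; the corollary is simply the iteration of the already-established rank-one weight-transfer identity. I do not expect any genuine obstacle here — unlike Proposition \ref{proposition:implicit_learning_dynamics}, there is no attempt to express the increments as a gradient, so the proof is purely a telescoping application of Theorem \ref{theorem:context2weights}.
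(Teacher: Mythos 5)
Your proposal is correct and follows essentially the same route as the paper: the paper derives the corollary by exactly this token-by-token peeling, writing out the first two applications of Theorem \ref{theorem:context2weights} (with $W=W_0$, then $W=W_1$, and $Y=\{c_1\}$, then $Y=\{c_2\}$) and then summarizing the iteration, which your induction merely makes formal. The bookkeeping you spell out — tail context as $C$, singleton $\{c_i\}$ as $Y$, $W_{i-1}$ as the current first-layer weight — matches the paper's intended argument verbatim.
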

Notice that $\Delta A(c_i)$ measures the effect of context token $c_i$ on the contextual block output. When $c_i$ has no effect on the output, that is when $\Delta A(c_i)$ is zero, and the corresponding update vanishes. 
Notice that the weight update at step $i$ is linear in the weights; namely, we can rewrite it as
\begin{equation}
W_i= W_{i-1} + h_i W_{i-1} A_i  = W_{i-1}(I + h_i A_i)
\quad\textrm{where}\quad A_i := \Delta A(c_i) A(c_{i+1},\dots, c_n, x)^T
\end{equation}
with adaptive learning rate given by
\begin{equation}
h_i := \frac{1}{\|A(c_{i+1}, \dots, c_n, x)\|^2}.
\end{equation}

In particular, this gives us a factorization formula for the total implicit weight matrix corresponding to the effect of context $[c_1,\dots, c_n]$ on input-token $x$:
\begin{eqnarray}
\label{equation:factorization_formula}
W_n & = & W_0(I+h_1A_1)(I + h_2 A_2)\cdots (I + h_n A_n).
\end{eqnarray}

\section{Experiment Details}
\label{appendix:experiment_details}

This section details the architectures, datasets, and training hyperparameters used to produce the results in Section \ref{section:experiments}. 

\subsection{Task Definition: Linear Regression In-Context}
Across all experiments, we generate instances of prompts composed of input-output pair token-vectors
$$
[c_1,\dots, c_n, z]\quad\textrm{with}\quad c_l = (x_k, h(x_k))^T 
\quad\textrm{and}\quad z = (x_{\textrm{query}}, 0)^T,
$$
where $x_i, x_{\query} \sim \mathcal{N}(0, I_d)$. The function $h(x) = \omega^Tx$ is sampled such that $\omega \sim \mathcal{N}(0, I_d)$. We use feature dimension $d = 2$ and context length $N = 100$ (unless otherwise specified). The model predicts the scalar $\widehat{y}(x_{\query}) = \omega_{\ttest}^T x_{\query}$. We train using the MSE loss over batches of size $B=32$.

\subsection{Multi-Layer Pre-LN Transformer Architecture}
For the numerical verification in Section 3.1, we employ an autoregressive transformer matching the architecture described in \cite{vaswani2017attention}. The network consists of $L=10$ blocks, where the $i$-th block is defined as:
$$T^{(i)}(x) = \LayerNorm(y + \MLP^{(i)}(y)), \quad \text{where } y = \LayerNorm(x + \MultiHeadAttn^{(i)}(x)).$$
We set $d_{\mlp} = 128$, $d_{\model} = 32$, and use $h=8$ attention heads with $d_k = d_{\model} / h$. The model was trained using Adam optimizer with a learning rate of $0.001$.

\subsection{Finetuning Comparison Setup}
For the explicit finetuning experiments (Section 3.2), we utilize a single-layer baseline variant ($L=1$). We initialize the transformer with the pretrained weights, then finetune using SGD, processing one example $\begin{pmatrix} x_i \\ \langle \omega_{\text{test}},x_i \rangle \end{pmatrix}$ at a time. The weight matrix $W$ of the MLP layer is updated at each step. We use a tuned SGD learning rate optimized for the task to ensure a fair comparison against the implicit gradient step.

\subsection{RNN Architectural Setup}
For the architectural diagnostics in Section 3.3, we construct a model $T = M_W \circ \text{RNN}$, replacing the multi-head self-attention block with a standard Recurrent Neural Network layer. To ensure a fair comparison, we match the parameter scale, setting the feature dimension of the RNN to $d_{\model} = 64$ and $d_\mlp = 128$. We trained the RNN-based model using a learning rate of $0.005$ for $10,000$ steps.

\end{document}